\theoremstyle{definition}
\newtheorem{theorem}{Theorem}
\newtheorem{proposition}{Proposition}
\newtheorem{definition}{Definition}
\newtheorem{problem}{Problem}
\theoremstyle{remark}
\newtheorem{remark}{Remark}
\DeclareMathAlphabet{\mathpzc}{OT1}{pzc}{m}{it}
\DeclareFontFamily{U}{jkpmia}{}
\DeclareFontShape{U}{jkpmia}{m}{it}{<->s*jkpmia}{}
\DeclareFontShape{U}{jkpmia}{bx}{it}{<->s*jkpbmia}{}
\DeclareMathAlphabet{\mathfrak}{U}{jkpmia}{m}{it}
\DeclareMathOperator{\arccot}{arccot}
\DeclareMathOperator{\dt}{dt}
\newcommand{\ys}{\mathsf{y}}
\newcommand{\xs}{x_{\mathtt{s}}}
\newcommand{\xsa}{x_{\mathtt{s},1}}
\newcommand{\xsb}{x_{\mathtt{s},2}}
\newcommand{\xsc}{x_{\mathtt{s},3}}
\newcommand{\xsd}{x_{\mathtt{s},4}}
\newcommand{\xf}{x_{\mathtt{f}}}
\newcommand{\xfa}{x_{\mathtt{f},1}}
\newcommand{\xfb}{x_{\mathtt{f},2}}
\newcommand{\xfc}{x_{\mathtt{f},3}}
\newcommand{\xfd}{x_{\mathtt{f},4}}
\newcommand{\xfe}{x_{\mathtt{f},5}}
\newcommand{\dxs}{\dot{x}_{\mathtt{s}}}
\newcommand{\dxf}{\dot{x}_{\mathtt{f}}}
\newcommand{\y}{\mathbf{y}}%
\newcommand{\0}{\mathbf{0}}
\newcommand{\R}{\mathbb{R}}
\renewcommand{\S}{\mathcal{S}}
\newcommand{\case}[1]{\begin{cases}#1\end{cases}}
\newcommand{\al}[1]{\begin{align}#1\end{align}}
\newcommand{\eq}[1]{\begin{equation}#1\end{equation}}
\newcommand{\ald}[1]{\begin{aligned}#1\end{aligned}}
\newcommand{\eqn}[1]{\begin{equation*}#1\end{equation*}}
\newcommand{\subeq}[1]{\begin{subequations}#1\end{subequations}}
\newcommand{\st}{\text{s.t. }}
\newcommand{\bmat}[1]{\begin{bmatrix} #1 \end{bmatrix}}
\title{Optimal Control of a Differentially Flat 2D Spring-Loaded Inverted Pendulum Model}
\author{Hua Chen$^{1}$, Patrick M. Wensing$^{2}$, and Wei Zhang$^{1}$%
\thanks{$^{1}$Hua Chen and Wei Zhang are with Department of Mechanical and Energy Engineering, Southern University of Science and Technology, Shenzhen, China.
{\tt\small chenh6@sustech.edu.cn, zhangw3@sustech.edu.cn}}%
\thanks{$^{2} $Patrick M. Wensing is with Department of Aerospace and Mechanical Engineering, University of Notre Dame, Notre Dame, IN 46556 USA
{\tt\small pwensing@nd.edu}}}
\begin{document}

\maketitle

\begin{abstract}
This paper considers the optimal control problem of an extended spring-loaded inverted pendulum (SLIP) model with two additional actuators for active leg length and hip torque modulation. These additional features arise naturally in practice, allowing for consideration of swing leg kinematics during flight and active control over stance dynamics. On the other hand, nonlinearity and the hybrid nature of the overall SLIP dynamics introduce challenges in the analysis and control of the model. In this paper, we first show that the stance dynamics of the considered SLIP model are differentially flat, which has a strong implication regarding controllability of the stance dynamics. Leveraging this powerful property, a tractable optimal control strategy is developed. This strategy enables online solution while also treating the hybrid nature of the SLIP dynamics. Together with the optimal control strategy, the extended SLIP model grants active disturbance rejection capability at any point during the gait. Performance of the proposed control strategy is demonstrated via numerical tests and shows significant advantage over existing methods.
	
\end{abstract}


\section{Introduction}\label{sec:introduction}
The Spring-Loaded Inverted Pendulum (SLIP) model has long played an important role in the development of robot locomotion~\cite{Geyer18}. Early biomechanics studies motivated the role of compliant leg operation~\cite{Blickhan89} for storing and releasing energy during running, while recent studies have likewise provided evidence for similar mechanisms in walking \cite{Geyer06}. From the physical embodiment of SLIP principles in Raibert's early hoppers~\cite{Raibert86} to compliant legged bipeds designed by Hurst and colleagues~\cite{Sreenath13, Hubicki18}, this promise of energetic economy through passive compliance has subsequently motivated many successful robot designs. Beyond design, the SLIP model has also served as a common template model~\cite{FullKoditschek99} to guide the control of hopping and running, both for robots that incorporate physical compliant mechanisms~\cite{Xin18,Tsagarakis17} and those that provide active compliance~\cite{De15} (e.g., via transparent actuation~\cite{Wensing17b}). 

These motivations from the design and control sides have led to extensive investigation into properties of the passive SLIP, and methods to actively control it. In 2D, the SLIP exhibits open-loop stable gaits~\cite{Holmes2006} for some paired combinations of its touchdown angle and leg stiffness. By comparison, lateral dynamics in 3D versions of the model disrupt this open-loop stability, requiring active stepping strategies~\cite{Seipel05}. Time-based swing leg strategies increase the robustness of SLIP models for operation on uneven terrain~\cite{Ernst10,Wu13,Liu16} and explain the robustness afforded by swing leg retraction~\cite{Seyfarth03}. Other active strategies have focused on coordinating touchdown angle selections with variations in the spring constant \cite{Saranli12,Wensing13b}, varying nominal leg length~\cite{Schmitt09,Byl16} or adjusting hip torque~\cite{Ankarali2010} during stance to modulate the total energy of the model. As a common thread, strategies are either applied open loop, or include once-per-step feedback. 

Gait stability for these methods has most commonly been studied using Poincar\'e analysis, which is complicated by the fact that the step-to-step return map does not admit an analytical solution. This property has motivated the development of approximate analytical solutions for the stance evolution of SLIP models \cite{Geyer05,Shahbazi15}, which may be used in the design of SLIP controllers \cite{Saranli12}. Another promising strategy, proposed by Piovan and Byl \cite{Byl16}, is to use partial feedback linearization techniques for leg-length modulation to analytically solve part of the dynamics. This previous work shows how active control can be used to simplify the SLIP dynamics, which is a motivating observation for the work herein.

Despite the fact that the extension of the classical SLIP with a leg length actuator or a hip torque actuator has been studied in the literature separately, the SLIP model containing both actuators has not been adequately investigated. In addition, swing leg evolution during flight is typically ignored, instead assuming instantaneous positioning of the leg. 
To address these limitations, this paper considers an extended SLIP model with both active leg length and hip torque modulation capabilities, and addresses swing leg kinematics during flight. During stance, this extended model is modeled as fully actuated and thereby differentially flat~\cite{Fliess1993,Fliess1995}. This property significantly simplifies analysis and control of the SLIP by working with its flat outputs~\cite{Martin2002,Martin2003}. While flatness-based planning and control strategies have demonstrated wide success in autonomous vehicles and quadrotors \cite{Villagra2007,Mellinger2011}, their application to legged robots has been less investigated (see e.g.,~\cite{Agrawal09}).  Part of the reason that flatness-based methods have not been more widely used is that they do not easily address constraints on control inputs. Within legged locomotion, constraints such as those on ground forces and actuators are a main challenge that limit system behavior, even in fully actuated regimes.

The contributions of this paper are as follows. First, a hybrid system model for an extended SLIP model is derived. This model features both leg length and hip torque modulations as well as a kinematic swing leg model. The kinematic swing leg model better captures limitations on achievable footholds given swing leg velocity limitations during flight. This combination of modeling features introduces challenges in the analysis and control due to nonlinearities of the stance dynamics and the hybrid nature of the overall dynamics. We show in this paper that the stance dynamics of the extended SLIP are {\em differentially flat}, which has a strong implication on controllability of the considered SLIP model. The flatness-based optimal control strategy for the extended SLIP is the {\em main contribution} of this paper. By exploiting the flatness property of the stance dynamics and other structures of the hybrid optimal control problem, a quadratic programming (QP) based control scheme is developed. The proposed scheme aims to match the behavior of the passive SLIP, which itself respects physical constraints on forces, lending a flatness-based strategy that respects constraints through soft penalization.  The resulting optimal control strategy is able to handle disturbances at any point during the gait, both in flight or in stance, which would be impractical for traditional Poincar\'e based once-per-step controllers. 

The paper is organized as follows. Section~\ref{sec:slipmodel} first derives the hybrid system model for the extended SLIP model considered in this paper, then details the differential flatness property of its stance dynamics. Section~\ref{sec:optcontrol} first formulates the optimal control problem of the hybrid SLIP dynamics, then describes the differential flatness-based solution strategy. This strategy enables on-line solution while also treating the hybrid dynamics of the model. Section~\ref{sec:simulation} provides demonstrations on the usage and performance of the proposed SLIP control strategy. Section~\ref{sec:concl} concludes the paper and discusses future work.

\section{Hybrid System Modeling of a Differentially Flat Spring-Loaded Inverted Pendulum}\label{sec:slipmodel}
In this paper, we consider an extended spring-loaded inverted pendulum (SLIP) model with two additional actuators. The first one is a linear actuator that can actively adjust the spring length and the second one is a rotary actuator that can inject hip torque. Similar to the passive SLIP model, evolution of this extended SLIP model contains flight phases and stance phases, as well as transitions between them. A graphical illustration of the considered SLIP is given in Fig.~\ref{fig:SLIP}. 

\begin{figure}[b] 
	\centering
		\includegraphics[width=0.8\linewidth]{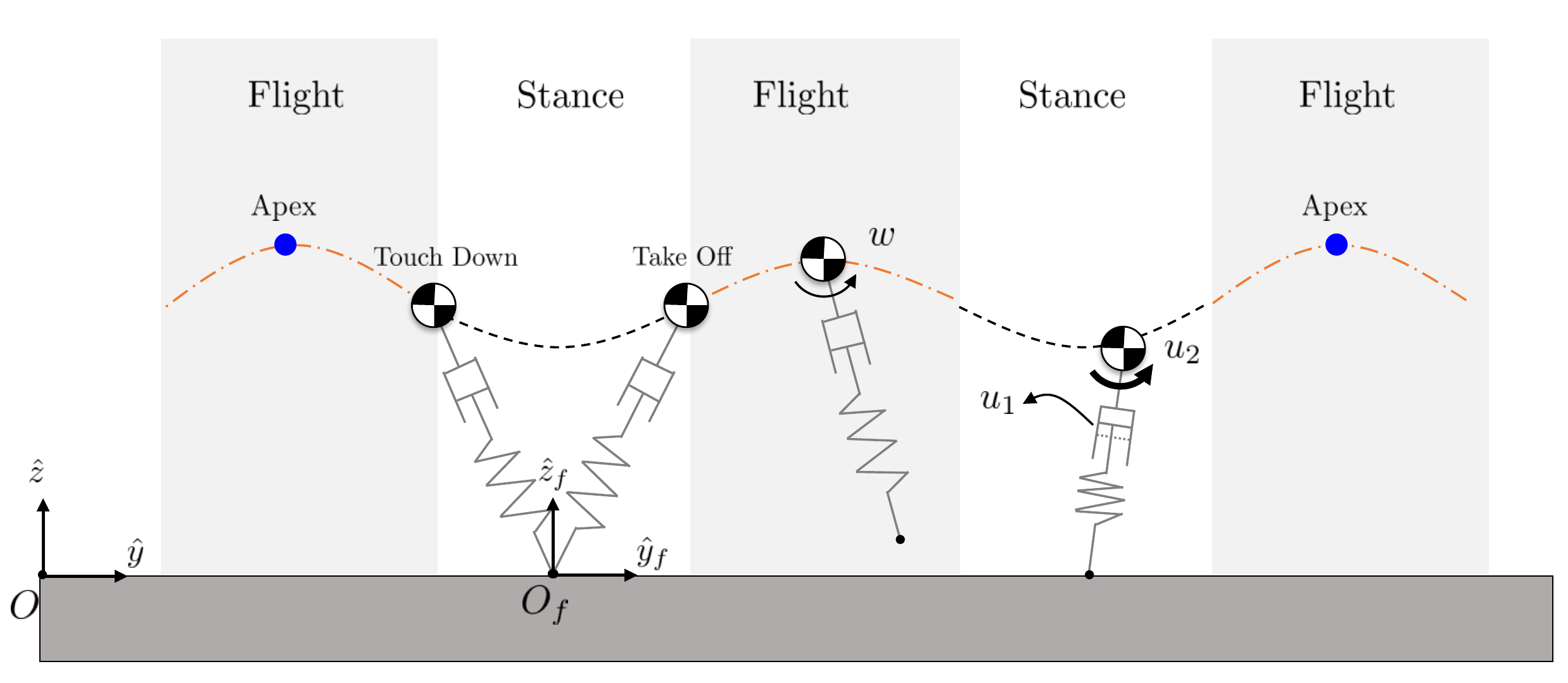} 
	\caption{\footnotesize Extended Spring-Loaded Inverted Pendulum }
	\label{fig:SLIP}
\end{figure}

Due to the additional actuators, the stance dynamics of the extended SLIP model are differentially flat, which significantly simplifies its analysis and control. In this section, we first derive a hybrid system model for the overall SLIP dynamics, then formally prove differential flatness of the stance dynamics. 



\subsection{Hybrid System Modeling of the Extended SLIP}
In addition to the traditional center of mass (CoM) dynamics, this paper considers the swing leg kinematics during flight phase as well. The swing leg kinematics are modeled considering swing leg angular speed as a control input. 
Let $(y,z)\in \R^2$ be Cartesian coordinates of CoM in a fixed world frame attached to the ground and let $\theta$ be the angle between the positive $y$ axis and swing leg measured counterclockwise. The state-space flight dynamics are then given by: 
\eqn{\ald{\ald{\dxf &= \bmat{ 0& 1 & 0 & 0 & 0 \\ 0 & 0 & 0 & 0 & 0 \\ 0 & 0 & 0 & 1 & 0 \\ 0 & 0 & 0 & 0 & 0  \\ 0 & 0 & 0 & 0 & 0 }\xf +\bmat{ 0\\0\\0\\0\\1  } w + \bmat{0\\0\\0\\-g\\0}  \\ & \triangleq f_\mathtt{f}(\xf,w), }}} where $\xf = (\xfa,\xfb,\xfc,\xfd,\xfe) = (y,\dot{y}, z, \dot{z},\theta)\in \R^5$ denotes the flight states, and $w$ is the swing leg angular speed.

\begin{remark}\label{rmk:swing}
The kinematic model of the swing leg during flight differs from the traditional instantaneous re-positioning in that we consider bounds on the angular speed $w$. These bounds result in changes to the reachable set of touch down angles over time. This important difference has a significant impact on the optimal control problem of the overall SLIP dynamics, as will be discussed in detail later in Section~\ref{sec:optcontrol}.
\end{remark}

In stance, let $\ell$ and $\theta$ be the leg length and leg angle, and let $u_1$ and $u_2$ be the displacement of the linear actuator and the torque generated by the rotary actuator respectively. Lagrangian techniques yield the following equations of motion:
\eqn{ \ald{m\ddot{\ell} & = m\ell\dot{\theta}^2-k(\ell-\ell_0)-mg\sin(\theta) + k u_1\\ m\ell^2\ddot{\theta}& = -mg\ell\cos(\theta) -2m\ell\dot{\ell}\dot{\theta} + u_2}}

Denoting the stance state by $x_{\mathtt{s}}= (x_{\mathtt{s},1},x_{\mathtt{s},2},x_{\mathtt{s},3},x_{\mathtt{s},4}) = (\theta,\dot{\theta},\ell,\dot{\ell})$, the resulting state-space stance dynamics are given by:
\eq{\label{eq:cslip_s} \ald{ \dxs   = &\bmat{ x_{\mathtt{s},2}\\ -2x_{\mathtt{s},2}x_{\mathtt{s},3}^{-1}x_{\mathtt{s},4}- g\cos(x_{\mathtt{s},1})x_{\mathtt{s},3}^{-1} \\ x_{\mathtt{s},4} \\ -g\sin(x_{\mathtt{s},1})+x_{\mathtt{s},2}^2x_{\mathtt{s},3}+\frac{k}{m}(\ell_0-x_{\mathtt{s},3})}\\ &\qquad \qquad \  \qquad\qquad  + \bmat{ 0 & 0 \\ 0 & \frac{1}{m} \xsc^{-2}\\ 0 & 0\\  \frac{k}{m} & 0} \bmat{ u_1 \\ u_2}\\  \triangleq &  f_{\mathtt{s}}(\xs)+ g_{\mathtt{s}}(\xs)u }}


Transitions between the stance and flight phases are governed by touch down (TD) and take off (TO) events as shown in Fig.~\ref{fig:SLIP}. All TD events lie on a four dimensional manifold given below:
\eqn{\S_\text{TD}  = \left\{\xf| \xfc - \ell_0 \sin(\xfe) = 0, \xfd <0. \right\},  }
while all TO events lie on a five dimensional manifold in the joint stance state-input space, which is given by\vspace{-2pt} 
\eqn{ \ald{\S_\text{TO}   = &\{ (\xs,u)|  \cos(\xsa)\xsb\xsc + \sin(\xsa)\xsd >0 ,\\ & \frac{k}{m}\sin(\xsa) (\ell_0 - \xsc + u_1)+\frac{\cos(\xsa)}{m\xsc} u_2 = 0 .\}.  }}
In the above definitions, $\S_\text{TD}$ is standard but $\S_\text{TO}$ adapts to the extended SLIP model considered in this paper, requiring horizontal acceleration to be zero and vertical velocity pointing upward. 

In addition, since state variables describing the flight and stance dynamics are different (but equivalent), the following changes of variables between them, denoted by $\Gamma_\mathtt{fs}$ and $\Gamma_\mathtt{sf}$, are needed.
\eqn{\ald{\Gamma_\mathtt{fs}:& \case{ \xsa & = \arccot(\frac{\xfa}{\xfc}) \\ \xsb & =  \frac{\xfa\xfd -\xfc\xfb}{(\xfa-y_\mathtt{f})^2+\xfc^2} \\ \xsc & = \sqrt{(\xfa-y_\mathtt{f})^2+\xfc^2} \\ \xsd & = \frac{\xfa \xfb + \xfc \xfd}{\sqrt{(\xfa-y_\mathtt{f})^2+\xfc^2} } }\\ \label{eq:s2f}\Gamma_\mathtt{sf}:& \case{ \xfa & = \cos(\xsa)\xsc+y_\mathtt{f} \\ \xfb & = -\sin(\xsa)\xsb\xsc + \cos(\xsa)\xsd \\ \xfc & = \sin(\xsa) \xsc \\ \xfd & = \cos(\xsa)\xsb\xsc + \sin(\xsa)\xsd \\ \xfe & = \xsa }}}

Putting all the above elements together, the overall dynamics of the proposed SLIP model is given below.
\eq{\label{eq:slip_hy} \ald{ & \dot{x}  = \case{  f_{\mathtt{s}}(x,\nu), & \text{ if }  \eta = 1,\\ f_{\mathtt{f}}(x,\nu), & \text{ if }  \eta = 0, } \\
		&  \nu = \case{u, & \text{ if } \eta = 1,\\ w, & \text{ if } \eta = 0, } \\ 
		&\eta^+  = \case{ 1, & \text{ if } x \in \S_\text{TD}, \\  0  & \text{ if } (x,\nu) \in \S_\text{TO}, \\ \eta, & \text{ otherwise, }} \\ & x^+  = \case{\Gamma_{\mathtt{fs}}(x) & \text{ if } x \in \S_\text{TD}, \\ \Gamma_{\mathtt{sf}}(x) & \text{ if } (x,\nu) \in \S_\text{TO}, \\ x, & \text{ otherwise. } } }}
Due to the difference between flight and stance dynamics and the transition events between them, the overall SLIP dynamics are essentially hybrid. Let $\mathbf{x} = (x,\eta)$ be the overall state variable containing both the continuous state and discrete mode, and let $\nu$ be the continuous control input, we write $\dot{\mathbf{x}} = f_\text{SLIP}(\mathbf{x},\nu)$ hereafter to abstractly denote the overall SLIP dynamics. Analysis and control of this SLIP dynamics is nontrivial, mainly due to the nonlinearities of the stance dynamics and the hybrid nature of the overall dynamics. 

To alleviate these issues, we show in the following subsection that the stance dynamics of the extended SLIP~\eqref{eq:cslip_s} is \emph{differentially flat}. This important feature offers a powerful tool for addressing the nonlinearities in the stance dynamics, which in turn relieves the difficulties in controlling the overall hybrid SLIP dynamics. 

%

\subsection{Differential Flatness of Stance Phase Dynamics}\label{sec:df_stance}

Differential flatness is a geometric property of general nonlinear control systems which was first introduced in~\cite{Fliess1993}. Roughly speaking, it extends the idea of controllability from linear systems to nonlinear systems. The formal definition of differential flatness is as follows.

\begin{definition}[\cite{Martin2002,Martin2003}]\label{def:df}
	A nonlinear control system $\dot{x} = f(x,u)$ with $x\in \R^n$ and $u\in \R^r$ is differentially flat, if there exist flat outputs $\mathsf{y}\in \R^r$ satisfying the following conditions.
	\begin{enumerate}[label=\arabic*.]
		\item There exists a (local) diffeomorphism $h$ such that,
		\eq{ \mathsf{y} = h(x,u,\dot{u},\ldots,u^{(p)})}
		\item There exist (local) diffeomorphisms $\phi$ and $\psi$ such that,
		\eq{\label{eq:invflat}\ald{ x & = \phi(\mathsf{y},\dot{\mathsf{y}},\ddot{\mathsf{y}},\ldots, \mathsf{y}^{(q)})\\ u & = \psi(\mathsf{y},\dot{\mathsf{y}},\ddot{\mathsf{y}},\ldots, \mathsf{y}^{(q+1)}) }}
		\item There does {\bf NOT} exist a function $\varphi$ such that \eq{\varphi(\mathsf{y},\dot{\mathsf{y}},\ddot{\mathsf{y}},\ldots, \mathsf{y}^{(s)}) = 0}
	\end{enumerate}
\end{definition}

\begin{remark}\label{rmk:df}
	The last condition is typically difficult to verify, even if the flat outputs and their derivatives are given. However, it has been shown in the literature that this condition will always hold if~\eqref{eq:invflat} holds~\cite{Fliess1993,Fliess1995,Waldherr2010}.
\end{remark}


\begin{theorem}\label{thm:df}
	The stance dynamics of the extended SLIP model~\eqref{eq:cslip_s} are \emph{Differentially Flat}.
\end{theorem}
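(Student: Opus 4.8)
The plan is to exhibit explicit flat outputs and verify the three conditions of Definition~\ref{def:df} directly. The guiding observation is that the extended SLIP is \emph{fully actuated} in stance: the configuration $(\theta,\ell)$ is two-dimensional and there are two independent inputs $(u_1,u_2)$. Accordingly, I would take the configuration variables themselves as the candidate flat outputs, $\ys = (\ys_1,\ys_2) = (\xsa,\xsc) = (\theta,\ell)$. Condition~1 is then immediate: $\ys$ is the projection of the stance state onto its first and third coordinates, so the map $h$ exists with no dependence on $u$ or its derivatives ($p=0$).

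For condition~2, I would recover the state from $(\ys,\dot{\ys})$ and the inputs from $(\ys,\dot{\ys},\ddot{\ys})$. The first and third rows of~\eqref{eq:cslip_s} are purely kinematic, giving $\xsb = \dot{\ys}_1$ and $\xsd = \dot{\ys}_2$; combined with $\xsa = \ys_1$ and $\xsc = \ys_2$, this defines $\xs = \phi(\ys,\dot{\ys}) = (\ys_1,\dot{\ys}_1,\ys_2,\dot{\ys}_2)$. Differentiating once more, the fourth row is affine in $u_1$ with nonzero coefficient $k/m$, and the second row is affine in $u_2$ with coefficient $\tfrac{1}{m}\ys_2^{-2}$. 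Because the relevant $2\times 2$ block of $g_{\mathtt{s}}(\xs)$ is diagonal, these two equations decouple and can each be solved in closed form: substituting $\ddot{\ys}_2$ into the fourth row yields $u_1$, and $\ddot{\ys}_1$ into the second row yields $u_2$, producing $u = \psi(\ys,\dot{\ys},\ddot{\ys})$.

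The only delicate point is the domain of validity. The terms $\xsc^{-1}$ in $f_{\mathtt{s}}$ and the coefficient $\ys_2^{-2}$ require $\ell = \ys_2 \neq 0$, so $\phi$ and $\psi$ are smooth on the physically meaningful region $\{\ell>0\}$, where the leg length stays strictly positive throughout stance. On this region both maps are well defined, establishing condition~2. Finally, condition~3 requires no separate argument: by Remark~\ref{rmk:df}, the existence of the inverse representation~\eqref{eq:invflat} already guarantees that no nontrivial differential relation $\varphi(\ys,\dot{\ys},\ldots) = 0$ among the flat outputs can hold. I do not anticipate a serious obstacle; the one step warranting care is the invertibility of the input coupling on $\{\ell>0\}$, which reduces to the diagonal structure of $g_{\mathtt{s}}$ and the nonvanishing of its diagonal entries there.
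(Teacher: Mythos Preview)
Your argument is correct: taking the configuration variables $(\theta,\ell)$ as flat outputs, recovering the full state from $(\ys,\dot\ys)$, and inverting the decoupled actuated rows for $(u_1,u_2)$ on $\{\ell>0\}$ verifies Definition~\ref{def:df}, with condition~3 handled by Remark~\ref{rmk:df}. The paper, however, makes a different choice of flat outputs, namely the Cartesian CoM position $\ys=(\xsc\cos\xsa,\,\xsc\sin\xsa)$, and then explicitly computes the maps~\eqref{eq:phi}--\eqref{eq:psi}. The paper in fact acknowledges your route in Remark~\ref{rmk:dfproof}, noting that any fully actuated holonomic system admits its configuration variables as flat outputs. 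Your choice is more direct and keeps the algebra minimal; the paper's Cartesian choice costs a bit more computation but pays off downstream: the resulting flat outputs coincide with the flight-phase CoM coordinates, so the take-off constraint and the reference tracking in~\eqref{eq:stance_flat} are expressed directly in $(\ys,\dot\ys,\ddot\ys)$ without an extra change of variables, which is what makes the QP reformulation in Section~\ref{sec:oc_stance} clean.
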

\begin{proof}
We prove this theorem by checking the definition for a candidate set of flat outputs. For the given stance dynamics, we choose the flat outputs to be $\ys = (\ys_1,\ys_2) = (\xsc\cos(\xsa),\xsc\sin(\xsa))$. Differentiating $\ys_1$ and $\ys_2$ twice results in the following relationships.
\eq{\label{eq:flatoutput}\ald{ \ys_1 & = \cos(\xsa)\xsc \\ 
		\dot{\ys}_1 & = -\sin(\xsa)\xsb\xsc+\cos(\xsa)\xsd \\ 
		\ddot{\ys}_1 & = \frac{k}{m}\cos(\xsa) (\ell_0 - \xsc +u_1) - \frac{\sin(\xsa)}{m\xsc}u_2 \\ 
		\ys_2 & = \sin(\xsa)\xsc \\ 
		\dot{\ys}_2 & = \cos(\xsa)\xsb \xsc +\sin(\xsa)\xsd \\ 
		\ddot{\ys}_2 & = \frac{k}{m} \sin(\xsa)(\ell_0 -\xsc + u_1) + \frac{\cos(\xsa)}{m\xsc}u_2 - g 
	}}

Solving the above algebraic equations for $\xsa$, $\xsb$, $\xsc$, $\xsd$, $u_1$ and $u_2$ yields
\eq{\label{eq:phi}\ald{
		&\xsa  = \arccot(\frac{\ys_1}{\ys_2}),  & \xsb  = \frac{\ys_1\dot{\ys}_2 - \ys_2\dot{\ys}_1}{\ys_1^2+\ys_2^2} \\ 
		&\xsc  = \sqrt{\ys_1^2+\ys_2^2},   & \xsd  =  \frac{\ys_1\dot{\ys}_1 + \ys_2\dot{\ys}_2}{\ys_1^2+\ys_2^2}
	}}

\eq{\label{eq:psi}\ald{
		u_1 & =  \sqrt{\ys_1^2+\ys_2^2} + \frac{m\ys_1\ddot{\ys}_1+m\ys_2\ddot{\ys}_2+mg\ys_2}{k\sqrt{\ys_1^2+\ys_2^2}}-\ell_0, \\ 
		u_2 & = 	mg\ys_1+m\ys_1\ddot{\ys}_2-m\ys_2\ddot{\ys}_1
}}

These relationships verify the existence of diffeomorphisms $\phi$ and $\psi$, which in turn proves that $\ys$ satisfies all the conditions in Definition~\ref{def:df}. Therefore, the stance dynamics~\eqref{eq:cslip_s} is differentially flat.
\end{proof}

\begin{remark}\label{rmk:dfproof}
	An alternative proof of this theorem directly follows from the fact that any fully actuated holonomic system is differentially flat. In addition, any such system admits its configuration variables as a valid set of flat outputs~\cite{Martin2002}. Due to the fact that differential flatness is a geometric property, it is further independent of the choice of coordinates. Therefore, selections of flat outputs are not unique. The configuration variables $(\theta,\ell)$ serve as a valid set of flat outputs as well.
\end{remark}



Owing to the differential flatness property, any trajectory in the flat output space corresponds to a controlled trajectory of the original nonlinear dynamics and vice versa. This property enables us to consider analysis and control problems in the flat output space, in which complicated differential constraints become a simple chain of integrators. 

Albeit the fact that differential flatness addresses the nonlinearities in the stance dynamics, the overall SLIP dynamics remains hybrid. In the following section, we rigorously formulate the optimal control problem, then provide a tractable solution that fully exploits the differential flatness property.

\section{Optimal Control of the Differentially Flat SLIP Model}\label{sec:optcontrol}
Control of the extended SLIP aims to find both the swing leg angular speed during flight and the leg length and hip torque adjustments during stance to ensure stable gait. Without loss of generality, inspired by the periodic nature of the SLIP dynamics, we consider the hybrid optimal control problem of the extended SLIP model within one complete period between two consecutive take off events. The control objective is to achieve a certain desired take off state while respecting all constraints. 

\begin{problem}\label{prob:slip_plan}
	Given the SLIP dynamics~\eqref{eq:slip_hy}, a desired hybrid take off state $\mathbf{x}^\text{d} = (x^\text{d},1)$, for any given hybrid initial state $\mathbf{x}^0  = (x^0,\eta^0)$ with corresponding initial input $\nu^0$, find a solution to the following problem. 
	\eq{ \label{eq:slip_plan}\ald{  \min\limits_{T,\nu }\   & \int\limits_0^{T}  c(\mathbf{x},\nu)\dt  + c_\text{T}(\mathbf{x}(T))\\ \st \   & \dot{\mathbf{x}} = f_\text{SLIP}(\mathbf{x},\nu),  \\ & \mathbf{x}(0) = \mathbf{x}^0, \quad \nu(0) = \nu^0, \\ & \eta(T) = 1,\quad  \left(x(T),\nu(T)\right) \in \S_\text{TO},   }} where $c(\mathbf{x},\nu) = \|x - x^\text{ref}\|_Q^2+\|\nu - \nu^\text{ref}\|_R^2$ is the running cost which penalizes the deviation from some reference signal $ x^\text{ref}$ and $\nu^\text{ref}$, and  $c_T(\mathbf{x}(T)) = \|x(T) - x^\text{d}\|_{Q_T}^2$ is the terminal cost penalizing the deviation from the desired take off state $x^\text{d}$. $Q \succeq 0$, $R\succeq 0$, $Q_T\succ 0$ are weighting matrices. 
\end{problem}

As discussed in the above section, the hybrid nature of the extended SLIP dynamics and the nonlinearities in the stance phase dynamics are the main challenges, which considerably complicate the control problem. By taking advantage of the differential flatness property and an insightful observation on the structure of the hybrid dynamics, a tractable solution scheme to optimal control of the extended SLIP is developed. The key of such a solution scheme is to decompose the overall problem into a stance phase problem and a flight phase problem, according to the discrete mode of the initial state. A graphical illustration of the decomposition is provided in Fig.~\ref{fig:ocp}. 
\begin{figure}[t!] 
	\centering
	\includegraphics[width=0.8\linewidth]{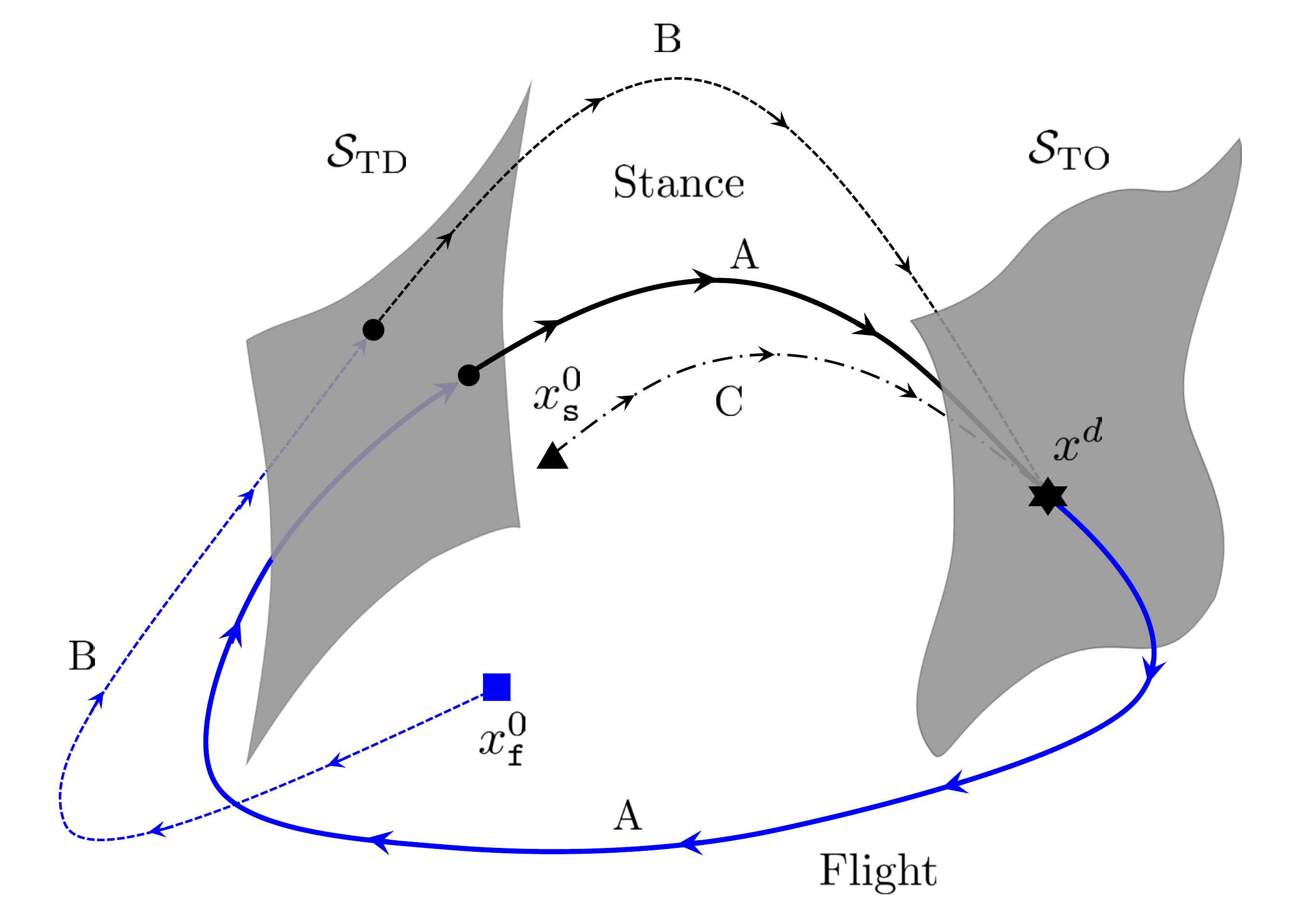} 	
	\caption{ \footnotesize Illustration of the optimal control problem. All flight phase trajectories are shown in blue and all stance phase trajectories are shown in black. Trajectory A (solid line) shows a periodic gait. Trajectory B (dotted line) is a solution with a flight phase initial state. Trajectory C (dash-dotted line) is a solution with a stance phase initial state. Arrows along trajectories indicate time flow direction.\label{fig:ocp} }
\end{figure}

\subsection{Stance Phase Optimal Control - Flatness-based Solution} \label{sec:oc_stance}
The stance phase optimal control problem is concerned with finding the optimal linear actuator displacement and hip torque signals for a given initial stance state. If the initialization is given as a stance state, i.e., $\eta^0 = 1$, Problem~\ref{prob:slip_plan} simply becomes
	\eq{ \label{eq:slip_plan_s}\ald{ \min\limits_{T_S,u}  & \ \int\limits_{0}^{T_S} c(\xs,u)\dt  + c_\text{T}(\xs(T_S))\\ \st \   & \dxs = f_\mathtt{s}(\xs,u),  \\ & \xs(0) = \xs^0, \quad u(0) = u^0,  \\ & \left( \xs(T_S),u(T_S)\right) \in \S_\text{TO} . }}

Since the desired goal state in the overall optimal control problem is a stance state, this stance phase problem is a standard optimal control of continuous nonlinear dynamics, involving no discrete variables. Differential flatness of the stance dynamics allows for a reformulation of~\eqref{eq:slip_plan_s} using flat outputs, resulting in a much simpler exposition of the optimal control problem that admits a tractable solution based on quadratic programming.


Let $\y = (\ys_1,\dot{\ys}_1,\ddot{\ys}_1, \ys_2,\dot{\ys}_2,\ddot{\ys}_2)$ be the flat outputs and their derivatives. The flat outputs $\y^0$, $\y^{\text{ref}}$, and $\y^d$ associated with any initial condition $(\xs^0,u^0)$, any reference trajectory $(\xs^{\text{ref}},u^{\text{ref}})$, and the desired terminal state $\xs^\text{d}$ can be easily computed via~\eqref{eq:flatoutput}. Reformulation of~\eqref{eq:slip_plan_s} in the output space is then given below. 	
\eq{\label{eq:stance_flat}\ald{
			\min\limits_{\y,T_S}  & \int\limits_0^{T_S} \| \y(t) - \y^{\text{ref}}(t)\|_{Q^\y_S}^2 \dt  + \|\y(T_S) - \y^d\|_{Q_T^\y}^2 \\
			\st  &  \y(0) = \y^0,  \\ 
			& \dot{\ys}_2(T_S)>0, \ \ddot{\ys}_2(T_S) = -g. 
}}
\begin{remark}\label{rmk:soft_penalization}
Finding the optimal time horizon $T_S$ is known to be hard for optimal control of nonlinear dynamics. In addition, potential state and\slash or input constraints, e.g., nonnegative ground reaction force and actuator limits, would drastically complicate the above optimization, mitigating the benefits from differential flatness. These issues are common to all flatness-based approaches. Existing flatness-based approaches (e.g.,~\cite{Mellinger2011}) heavily rely on heuristic initial guesses on the time horizon and numerical optimization techniques such as gradient descent. However, due to the natural response of the passive SLIP, arbitrarily selected $T_S$ and the associated reference trajectory $\xs^\text{ref}$ may not be physically meaningful.

In this paper, we exploit natural response of the passive SLIP model and develop a soft penalization scheme to tractably address all the above issues. To be specific, the key observation is that the passive SLIP response naturally satisfies the nonnegative ground reaction force constraint and all other actuator limit constraints, and directly returns an associated stance time. Hence, by selecting reference trajectory in the optimal control problem as the passive SLIP solution, stance control is regulated to generate trajectories that deviate as little as possible from the passive SLIP solution. 
\end{remark} 

Parametric function classes that are closed under differentiation have been widely used for solving the above infinite-dimensional optimization problem. In our solution, the following polynomial approximation is adopted.
\al{\label{eq:flat_poly}\mathsf{y}_1(t)  \approx \phi_0(t) \bar{\alpha},\quad \ys_2(t)&\approx  \phi_0(t) \bar{\beta},} where $\phi_0(t) = [1,\ t,\ t^2,\ \ldots,\ t^N]$ is the polynomial basis, $N$ is the maximum degree used in the parameterization, and $\bar{\alpha} = (\alpha_0,\alpha_1,\ldots,\alpha_N)$, $\bar{\beta} = (\beta_0,\beta_1,\ldots,\beta_N)$ are the coefficients to be determined. Let $\phi_1(t) = [0,\ 1,\ 2t,\ \ldots,\ Nt^{N-1}]$ and $\phi_2(t) = [0,\ 0,\ 2,\ \ldots,\ N(N-1)t^{N-2}]$, we have 
\subeq{\label{eq:flat_poly_deri}\al{ \dot{\ys}_1(t) & = \phi_1(t)\bar{\alpha},\quad \dot{\ys}_2(t)= \phi_1(t)\bar{\beta}, \\ \ddot{\mathsf{y}}_1(t) &=  \phi_2(t) \bar{\alpha},\quad  \ddot{\mathsf{y}}_2(t) =  \phi_2(t) \bar{\beta}.}}
Denote by $\Phi(t) = \bmat{ \phi^T_0(t) & \phi^T_1(t) & \phi^T_2(t)}^T$, $\bar{\Phi}(t) = \text{blkdiag}(\Phi(t),\Phi(t))$ for every $t$ and $\gamma = (\bar{\alpha}, \bar{\beta})$, the finite dimensional approximation to~\eqref{eq:stance_flat} with this polynomial parameterization is given by 

\eq{\ald{
		\min\limits_{\gamma }  & \int\limits_0^{T_S} \| \bar{\Phi}(t)\gamma - \y^{\text{ref}}(t)\|_{Q^\y_S}^2\dt  +  \|\bar{\Phi}(T_S)\gamma - \y^\text{d}\|_{Q^\y_T}^2 \\
		\st  & \bar{\Phi}(0) \gamma = \y^0,\\ 
		& \bmat{ 0 & \phi_1(T_S)}\gamma>0,\quad  \bmat{ 0 & \phi_2(T_S)}\gamma= -g .  
}}	
The above optimization is a standard quadratic programming (QP) in $\gamma = (\bar{\alpha}, \bar{\beta})$, which can be efficiently solved using various available solvers. Solution to the above QP gives parameters of the polynomial approximations. The corresponding flat outputs can then be computed via~\eqref{eq:flat_poly} and~\eqref{eq:flat_poly_deri}. Original states and inputs are in turn determined by~\eqref{eq:phi} and~\eqref{eq:psi}.

Apart from being important on its own, solution to the stance phase problem for all possible stance states constructs a value function for the stance dynamics. Such a value function serves as an important piece for solving the overall optimal control problem~\eqref{eq:slip_plan}. Henceforth, we denote by $V_\text{S}$ the value function for the stance phase problem. Utilizing this value function, we show in the following subsection how the optimal control problem can be solved when the initial state is given as a flight phase state.  


\subsection{Flight Phase Optimal Control}\label{sec:oc_flight}
Given a flight phase initial state, the hybrid nature of the SLIP dynamics comes into play. Relying on the aforementioned stance phase value function $V_\text{S}$ and the \emph{Bellman's principle of optimality}, the optimal control problem~\eqref{eq:slip_plan} can be reformulated as follows. 
\eq{\label{eq:f1}\ald{
			\min\limits_{w,T_F }  & \int\limits_0^{T_F} c(\xf,w) \dt + V_\text{S}\left(\Gamma_{\mathtt{fs}}(\xf(T_F))\right) &\\
			\st  & \dxf = f_\mathtt{f}(\xf,w),  \\
			& \underline{w} \le w \le \overline{w},   \\ 
			& \xf(T_F) \in \S_\text{TD}.
}}

Despite the simplicity of the flight phase dynamics,~\eqref{eq:f1} remains highly nontrivial to solve, mainly due to the four dimensional TD manifold that couples the flight phase and stance phase and the following implicit constraint between $w$ and $T_F$: 
\eq{\label{eq:Tf} \frac{1}{2} g T_F^2  - \xfd^0 T_F +\ell_0 \sin\!\left(\xfe^0 +\int\limits_0^{T_F} w(t) \dt \right) - \xfc^0 = 0 .} To design a tractable solution to the above optimal control problem, we first observe an important structural property of the touch down manifold, summarized the following proposition.
\begin{proposition}\label{prop:manifold}
Given any feasible initial state $\xf^0$ during flight, the corresponding TD manifold $\S_\text{TD}$ is only one dimensional and is exactly paramerterized by the touch down angle $\theta_f$. 
\end{proposition}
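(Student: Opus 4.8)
The plan is to integrate the flight dynamics in closed form and then exploit the fact that the swing-leg input $w$ influences only one state, namely the leg angle $\xfe$. Integrating $\dxf = f_\mathtt{f}(\xf,w)$ from $\xf^0$ gives $\xfa(t) = \xfa^0 + \xfb^0 t$, $\xfb(t) = \xfb^0$, $\xfc(t) = \xfc^0 + \xfd^0 t - \tfrac12 g t^2$, $\xfd(t) = \xfd^0 - g t$, and $\xfe(t) = \xfe^0 + \int_0^t w(\tau)\dtau$. The crucial structural observation is that the horizontal CoM position, the two CoM velocity components, and the CoM height at any candidate touch-down time $T_F$ are purely ballistic and entirely independent of $w$; only $\xfe(T_F)$ is shaped by the control. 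Hence all of the freedom injected by $w$ is confined to steering the single scalar $\xfe$.

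Next I would re-parameterize the set of reachable touch-down states by the touch-down angle $\theta_f := \xfe(T_F)$ itself, rather than by the control history $w$ or the time $T_F$. With $\theta_f$ held fixed, the right-hand side of the touch-down condition $\xfc(T_F) = \ell_0\sin(\xfe(T_F))$ becomes the known constant $\ell_0\sin\theta_f$, so the transcendental implicit relation~\eqref{eq:Tf} collapses to the explicit quadratic
\[
\tfrac12 g T_F^2 - \xfd^0 T_F + \left(\ell_0\sin\theta_f - \xfc^0\right) = 0
\]
in the single unknown $T_F$. This reduction from a transcendental to a quadratic equation is precisely what makes the argument tractable.

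The main step, and the one I expect to be the \emph{chief obstacle}, is showing that this quadratic admits a unique admissible root. The defining inequality of $\S_\text{TD}$ requires $\xfd(T_F) = \xfd^0 - g T_F < 0$, i.e. $T_F > \xfd^0/g$, which selects the descending branch of the ballistic arc. On this branch $\xfc(t)$ is strictly decreasing, so for any target height below the apex $\xfc^0 + (\xfd^0)^2/(2g)$ there is exactly one such $T_F$; equivalently, the larger root $T_F = (\xfd^0 + \sqrt{(\xfd^0)^2 - 2g(\ell_0\sin\theta_f - \xfc^0)})/g$ is the unique one satisfying $\xfd(T_F) < 0$, while the smaller root is the spurious ascending-branch solution that the $\dot z < 0$ condition in $\S_\text{TD}$ removes. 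Feasibility of $\xf^0$ is exactly what guarantees the discriminant is nonnegative over the relevant range of $\theta_f$, so that $T_F = T_F(\theta_f)$ is a well-defined (and smooth) function of the touch-down angle.

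Finally I would substitute $T_F(\theta_f)$ back into the integrated trajectory to obtain every touch-down state component as a function of $\theta_f$ alone: $\xfa = \xfa^0 + \xfb^0 T_F(\theta_f)$, $\xfb = \xfb^0$, $\xfc = \ell_0\sin\theta_f$, $\xfd = \xfd^0 - g T_F(\theta_f)$, and $\xfe = \theta_f$. Since the ballistic states carry no freedom beyond that induced by $\theta_f$, the portion of $\S_\text{TD}$ reachable from $\xf^0$ is exactly the image of this map. Moreover, because the last coordinate of the image equals $\theta_f$ itself, the map is automatically injective, so the reachable set is a genuine one-dimensional curve globally parameterized by the touch-down angle $\theta_f$, which is the assertion of the proposition.
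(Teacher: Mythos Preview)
Your proposal is correct and follows essentially the same approach as the paper: introduce the touch-down angle $\theta_f$ as the parameter, reduce the implicit relation~\eqref{eq:Tf} to a quadratic in $T_F$, select the root consistent with $\xfd(T_F)<0$, and substitute back to express the touch-down state as a function of $\theta_f$ alone. Your treatment is in fact more careful than the paper's, which simply writes down the larger root and the map $\Xi$ without explicitly arguing root selection or injectivity.
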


\begin{proof}
Define the following change of variable \eq{\theta_f = \xfe^0 +\int\limits_0^{T_F} w \dt, } where $\theta_f$ is the touch down angle. Plugging this change of variable into~\eqref{eq:Tf}, $T_F$ can be obtained by simply solving the quadratic equation in $T_F$, which yields 
\eqn{
T_F  = \frac{1}{g}\left(\xfd^0 +\sqrt{(\xfd^0)^2 - 2g\left( \ell_0\sin(\theta_f)-\xfc^0\right) }\right). }
Once $\xf^0$ is given, $\theta_f$ is the only variable in the above equation. Therefore, there is a local bijection between $T_F$ and $\theta_f$ within the acceptable touch down angle range. Moreover, any feasible touch down state is uniquely determined as $\xf(T_F) =\Xi(\theta_f;\xf^0)$, where the mapping $\Xi$ is given below
\eqn{\Xi(\theta_f;\xf^0)	= \bmat{\xfa^0+\xfb^0 T_F\\ \xfb^0\\ \ell_0\sin(\theta_f)\\ -\sqrt{(\xfd^0)^2 - 2g\left( \ell_0\sin(\theta_f)-\xfc^0\right) }\\ \theta_f }.}

In conclusion, given any initial state $\xf^0$, the corresponding TD manifold is only one dimensional and is exactly parameterized by the touch down angle $\theta_f$.
\end{proof}

This powerful observation of the TD manifold structure allows for the following simple reformulation of~\eqref{eq:f1} using the touch down angle $\theta_f$:
\eq{\label{eq:fsb}\ald{
		\min\limits_{w,\theta_f} \qquad & \int\limits_0^{T_F}  c(\xf,w) \dt + V_\text{S}(\Gamma_{\mathtt{fs}}(\Xi(\theta_f;\xf^0)) )&\\
		\st \qquad & \int_0^{T_F} w \dt =  \theta_f-\xfe^0 \\ 
		& \underline{w} \le w \le \overline{w}, \\
		& \theta_f \in (\frac{\pi}{2},\pi).
}}

The above reformulated optimization problem is essentially a one-dimensional optimization over $\theta_f$ where the cost function is nonlinear and is dependent on the value function of the stance phase problem. Solution to this optimization problem can be easily obtained via numerical optimization methods such as gradient descent. Once the optimal $\theta^*_f$ is computed, the corresponding swing leg angular speed profile is explicitly given by
\eq{\label{eq:sln_fsb} w = \min\left\{\overline{w}, \max\left\{\underline{w}, \frac{\theta_f^* -\xfe^0}{T_F} \right\} \right\}.}

\begin{remark}
    In practice, ground-speed matching is typically considered to reduce impact disturbances in the swing phase control problem through a minimization of the foot speed at touch down. In our SLIP model, the foot is massless, therefore swing leg retraction and velocity reset is omitted here.
\end{remark}

\subsection{Implementation}
\begin{figure}[b] 
	\centering
	\includegraphics[width=0.8\linewidth]{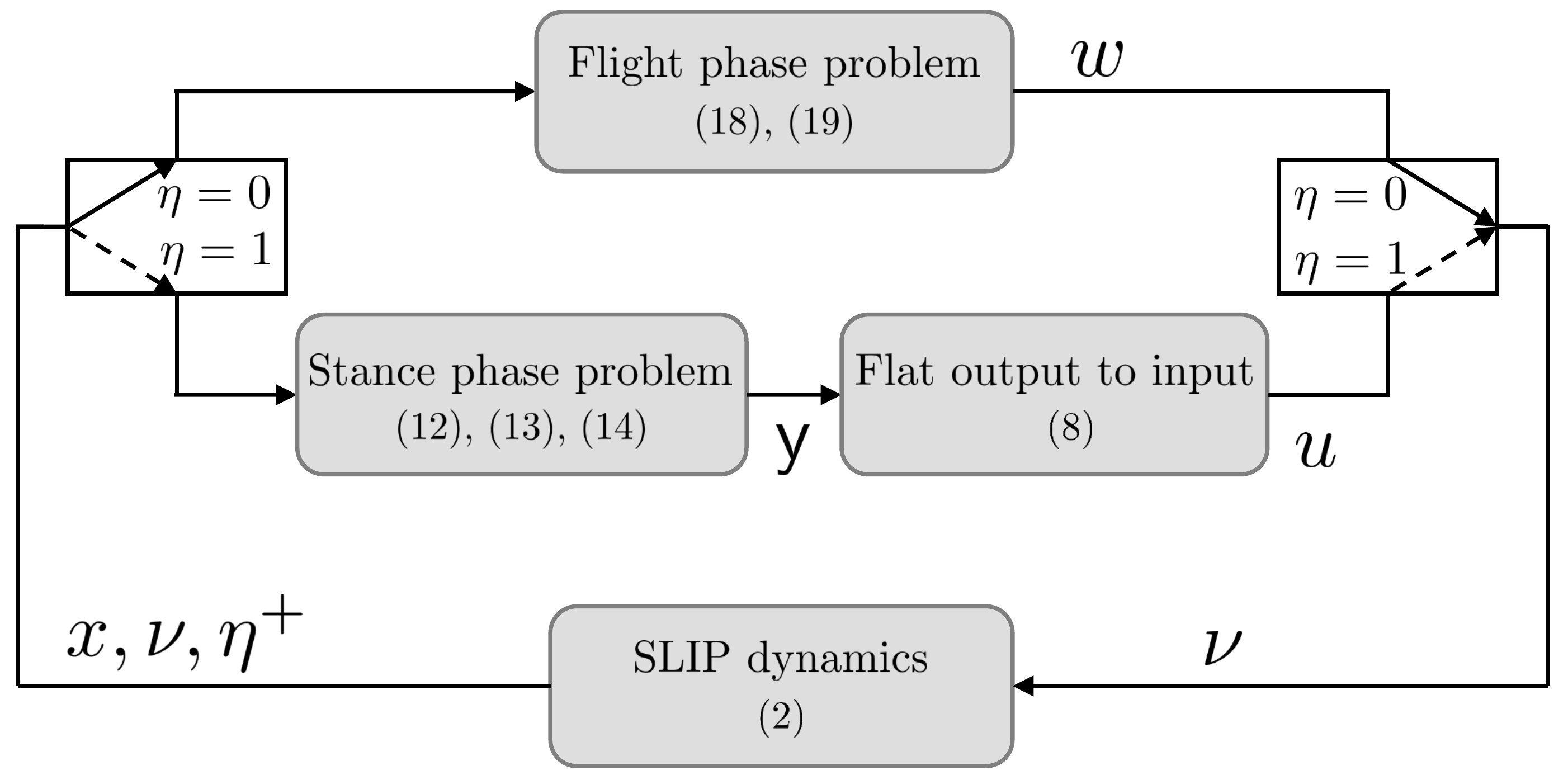} 	
	\caption{\footnotesize Control framework diagram. \label{fig:cdiag} }
\end{figure}
A block diagram summarizing the proposed control framework is given in Fig.~\ref{fig:cdiag}. This control strategy can be implemented in two different manners. The first way of implementing this controller is to solve the optimal control problem only at each touch down or take off event. Once the solution is obtained, it will be implemented in an open-loop fashion until the next touch down or take off event. By virtue of the QP-based solution approach, the proposed controller is able to re-compute the required input signals at a relative high rate. Therefore, the proposed controller can also be implemented in a receding horizon manner. This online re-planning ability is one of the major benefits of the proposed controller which enables active disturbance rejection throughout the SLIP operation. Both implementation strategies will be demonstrated in the following section.

\section{Case Studies}\label{sec:simulation}

In this section, performance of the proposed optimal control strategy for the extended SLIP model is demonstrated through numerical simulations. The SLIP model used in the test has mass of $80\text{kg}$, rest leg length of $1\text{m}$ and spring stiffness of $11\text{kN}\slash\text{m}$, which is modeled after a $50$-th percentile male. The desired take off state in the optimal control problem is selected to be corresponding to the apex state $(1.02\text{m},4.5\text{m}\slash\text{s})$.


\subsection{Comparison with Classical Linearized Deadbeat Controller}
The proposed optimal control strategy on the differentially flat SLIP is first compared with a classical Poincar\'e based once-per-step linearized deadbeat controller~\cite{Wensing13b} on a variable stiffness SLIP. In the comparison, disturbances are modeled as a change of initial apex state, with apex height ranging from $1.01\text{m}$ to $3\text{m}$, and the apex horizontal speed ranging from $1.8\text{m}\slash\text{s}$ to $7.3 \text{m}\slash\text{s}$. 

\begin{figure}[th] 
\centering
\subfigure[\footnotesize Comparison between regions of attraction - disturbance before apex]{%
\includegraphics[width=0.85\linewidth]{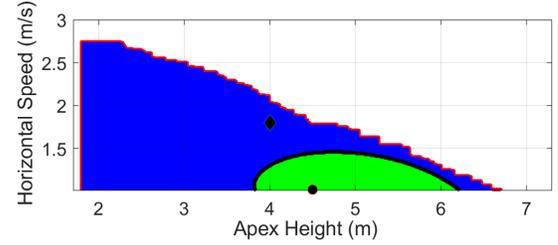}
\label{fig:oc_db_ba}}
\subfigure[\footnotesize Comparison between regions of attraction - disturbance after apex]{%
\includegraphics[width=0.85\linewidth]{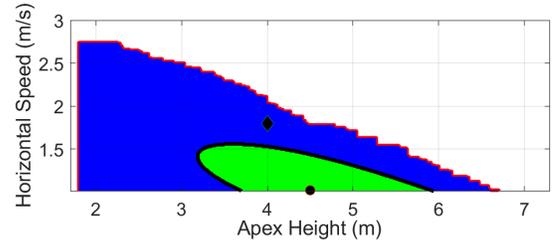} 
\label{fig:oc_db_aa}}
\caption{\footnotesize Regions of Attraction (ROA) of different controllers. RoA for the proposed controller is shown as blue with boundary highlighted in red. RoAs of the deadbeat controller are shown in green, with boundaries highlighted in black. The black circle and diamond show the desired apex state and the initial condition for Fig.~\ref{fig:traj}, respectively.}
\label{fig:oc_db}
\end{figure}
Transient performance is quantified using the accumulated apex error during transients. The proposed controller is updated at each touch down or take off event and the deadbeat controller is updated only at the apex events. The maximum hip torque and the maximum linear actuator displacement for the proposed controller are set to be $400\text{Nm}$~\cite{Zhong2017} and $0.1\text{m}$, while no actuation limits are imposed on the linearized deadbeat controller.  

Fig.~\ref{fig:oc_db} shows the region of attraction (RoA) of different controllers. It can be seen from the comparison that the region of attraction for the proposed controller is significantly larger than the classical linearized deadbeat controller. Moreover, to evaluate the transient performance of the two controllers, we define the accumulated error metric as the sum of the squared apex error from desired apex state over $8$ periods. As shown in Fig.~\ref{fig:tp_oc}, the proposed controller outperforms the linearized deadbeat controller in terms of this transient performance metric in general. Within a small neighborhood of the desired apex state, transient performance of the linearized deadbeat controller in response to disturbances before apex is slightly better than the proposed controller. The main reason of this phenomenon is that the linearized deadbeat controller is only trying to recover the desired apex state without considering the control effort applied to achieve this goal.

\begin{figure}[tb!] 
\centering
\subfigure[\footnotesize Transient performance of the proposed controller]{%
\includegraphics[width=0.85\linewidth]{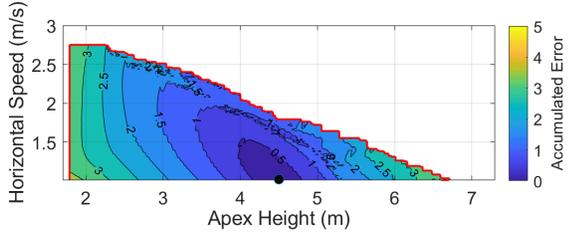}
\label{fig:tp_oc_vx}}
\subfigure[\footnotesize Transient performance of the linearized deadbeat controller with disturbances before apex ]{%
\includegraphics[width=0.85\linewidth]{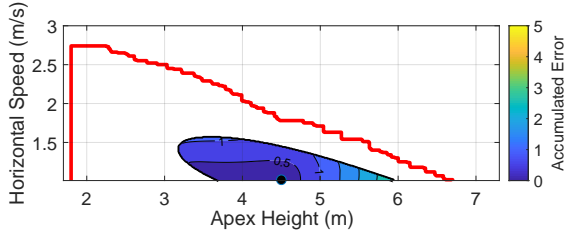} 
\label{fig:tp_oc_hy}}
\subfigure[\footnotesize Transient performance of the linearized deadbeat controller with disturbances after apex]{%
\includegraphics[width=0.85\linewidth]{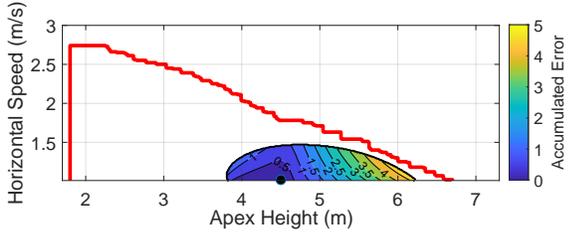} 
\label{fig:tp_oc_hy}}
\caption{\footnotesize Transient performance of different controllers. The boundary of the proposed controller is shown in red. Colormaps show the accumulated apex error during transients. Black circle shows the desired apex state. }
\label{fig:tp_oc}
\end{figure}

In addition to the RoA analysis, we show the CoM trajectory and horizontal speed profiles of a particular initial apex state, selected to be $(1.8\text{m},4\text{m}\slash\text{s})$, shown as a black diamond in Fig.~\ref{fig:oc_db}. It is clear from Fig.~\ref{fig:oc_db} that this apex state lies inside the RoA of the proposed controller while outside RoAs of the linearized deadbeat controller. CoM trajectories and horizontal speed profiles  shown Fig.~\ref{fig:traj}  agree with the RoA analysis.


\begin{figure}[t] 
\centering
\subfigure[\footnotesize CoM trajectories]{%
\includegraphics[width=0.85\linewidth]{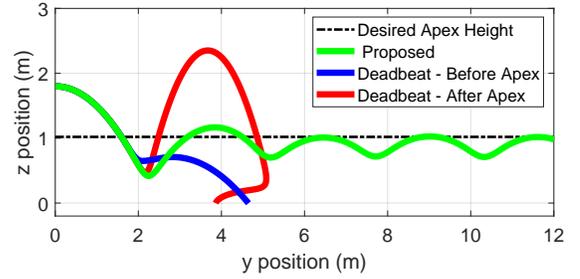}
\label{fig:traj_com}}
\subfigure[\footnotesize Horizontal speed profiles]{%
\includegraphics[width=0.85\linewidth]{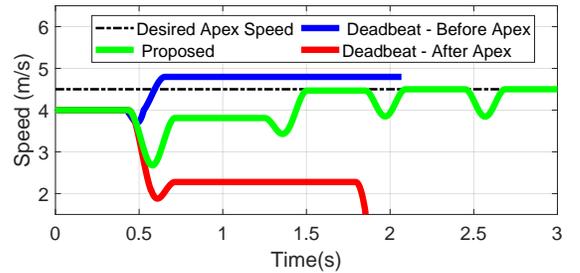} 
\label{fig:traj_velo}}
\caption{\footnotesize Simulation results for a particular apex state $(1.8\text{m},4\text{m}\slash\text{s})$ with different controllers}
\label{fig:traj}
\end{figure}

\subsection{Online Re-planning with Measurement Noise}
The proposed control strategy was also tested with measurement noise. The proposed controller is updated at $20$ Hz, and the computed control is implemented at $1$ kHz. Noise on measurements are modeled as i.i.d. (independent and identically distributed) uniform random variables additive to the true state in horizontal and vertical speeds. As a comparison, we tested the proposed controller under the same scenario but only updated it at each touch down or take off event as well.

\begin{figure}[t] 
\centering
\subfigure[\footnotesize Apex speed]{%
\includegraphics[width=0.85\linewidth]{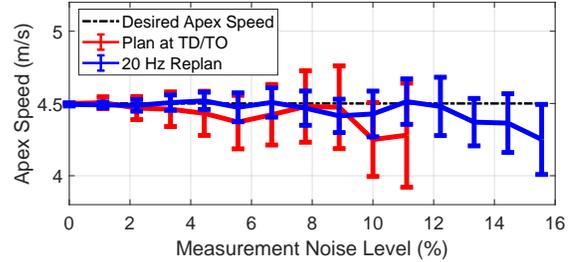}
\label{fig:noise_vx}}
\subfigure[\footnotesize Apex height]{%
\includegraphics[width=0.85\linewidth]{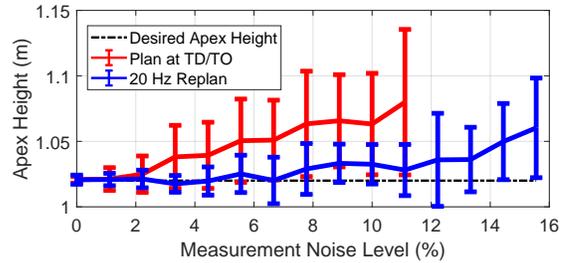} 
\label{fig:noise_hy}}
\caption{\footnotesize Performance of the proposed controller with different levels of measurement noise. Mean and standard deviation of the apex state under different levels of noise are shown. }
\label{fig:noise}
\end{figure}

Performance effects with increasing measurement noise are shown in Fig.~\ref{fig:noise}. With the touch down/take off planning scheme, the proposed controller is able to handle measurement noise up to $\pm 0.5\text{m}\slash\text{s}$ (about $10\%$ of the desired apex speed). With the $20$ Hz re-planning scheme, the proposed controller is able to handle up to $\pm 0.7\text{m}\slash\text{s}$ (about $15\%$) noise level. Despite the fact that measurement noise degrades the controller's performance, the proposed controller still manages to stabilize the SLIP operation at an acceptable level of accuracy. In addition, the online re-planning scheme exhibits a stronger ability in handling measurement noise, yielding smaller steady state error and volatility.

\section{Concluding Remarks and Future Work}\label{sec:concl}
In this paper, we consider an extended spring-loaded inverted pendulum (SLIP) model with leg length and hip torque modulation as well as a kinematic swing leg effect. A hybrid system model of the SLIP dynamics is developed. The dynamics of this extended model during stance are fully actuated, and thereby differentially flat, which has a strong implication regarding controllability. Taking advantage of this powerful feature, a tractable optimal control scheme is developed for rapid trajectory optimization. Jointly with the optimal control scheme, the model offers the capability for active disturbance rejection during both stance and flight. Performance of the control framework is demonstrated via numerical tests and shows practical advantage over existing methods.

The considered SLIP model and its control can potentially serve as a template for planning and control of complex legged robotic systems. In the future, the authors plan to investigate the control architecture of a legged robot in experiments using using the extended SLIP herein as the template model.

\bibliographystyle{IEEEtran}
\bibliography{RA_L_SLIP}

\end{document}